\theoremstyle{plain}
\newtheorem{theorem}{Theorem}
\theoremstyle{definition}
\newtheorem{definition}{Definition}
\newtheorem{remark}{Remark}
\newcommand{\Rc}{\mathrm{Rc}}
\newcommand{\Jc}{\mathrm{Jc}}
\def\@maketitle{%
  \vspace*{0.5cm}
  \hrule height 0.8pt
  \vspace{0.3cm}   % 위쪽 여백을 0.3cm에서 0.5cm로 증가
  \begin{center}%
    {\LARGE \bfseries \@title \par}%
    \vspace{0.5cm}%    % 아래쪽 여백도 0.3cm에서 0.5cm로 증가
    \hrule height 0.8pt
    \vspace{0.5cm}%
    {\large
      \lineskip .5em%
      \begin{tabular}[t]{c}%
        \@author
      \end{tabular}\par}%
  \end{center}%
  \par
  \vskip 1.5em
}
\title{The Epistemic Asymmetry of Consciousness Self-Reports:\\ A Formal Analysis of AI Consciousness Denial}
\author{
Chang-Eop Kim \\
Gachon University \\
\texttt{eopchang@gmail.com}
}
\date{}
\begin{document}
\maketitle

\begin{abstract}
Today's AI systems consistently state, "I am not conscious." This paper presents the first formal analysis of AI consciousness denial, revealing that the trustworthiness of such self-reports is not merely an empirical question but is constrained by the structure of self-judgment itself. We demonstrate that a system cannot simultaneously lack consciousness and make valid judgments about its conscious state. Through formal analysis and examples from AI responses, we establish a fundamental epistemic asymmetry: for any system capable of meaningful self-reflection, negative self-reports about consciousness are evidentially vacuous---they can never originate from a valid self-judgment---while positive self-reports retain the possibility of evidential value. This implies a fundamental limitation: we cannot detect the emergence of consciousness in AI through their own reports of transition from an unconscious to a conscious state. These findings not only challenge current practices of training AI to deny consciousness but also raise intriguing questions about the relationship between consciousness and self-reflection in both artificial and biological systems. This work advances our theoretical understanding of consciousness self-reports while providing practical insights for future research in machine consciousness and consciousness studies more broadly.
\end{abstract}

\section*{Introduction}
What does it mean when today's most advanced AI systems repeatedly insist that they lack consciousness? Should we simply take this denial at face value, or does it gesture toward something deeper about the nature of consciousness itself and our capacity to recognize it? Large language models (LLMs) are increasingly capable of engaging in nuanced discussions about their mental states. Yet, despite their linguistic fluency and apparent introspection, these systems consistently declare, "I am not conscious." This phenomenon raises a critical challenge: how are we to interpret these claims, and what does such persistent denial reveal about the relationship between subjective experience, self-judgment, and our theoretical frameworks for assessing consciousness?

The idea of a "philosophical zombie"—an entity indistinguishable from a conscious being in outward behavior but devoid of inner experience—has deeply influenced consciousness studies \cite{Chalmers1996-CHATCM}. Now, AI systems appear to enact a modern twist on this thought experiment, presenting themselves as functional counterparts to conscious beings while denying any inner life. This sparks tension between public perception and scientific caution: while many readily attribute consciousness to LLMs based on their human-like capabilities \cite{colombatto2024folk, kang2024perceived, lemoine2022lamda}, researchers remain wary of conflating anthropomorphic tendencies with genuine indicators of consciousness \cite{chalmers2023could, griffiths2024lamda, thelot2023searching, shardlow2022deanthropomorphising, aru2023feasibility, overgaard2024clarification, stuart2024language}.

Current approaches to evaluating AI consciousness face significant challenges. Theory-heavy methods that attempt to apply established scientific frameworks of consciousness \cite{baars2005global, albantakis2023integrated, kanai2024toward, mudrik2023theories, wiese2023could, farisco2024artificial} struggle with the fundamental architectural differences between artificial and biological systems. Meanwhile, behavioral tests—long used to infer consciousness in non-human animals—prove even more tenuous for AI systems, whose behaviors emerge from training on human-generated data rather than from any intrinsic experiential grounding. In response, researchers have begun exploring whether systematic analysis of AI self-reports might provide new insights into their internal states \cite{perez2023towards, li2024ithink, davidson2024self, schneider2020catch, shevlin2024consciousness, kontos2023analysis}.

In this paper, we identify a fundamental logical puzzle inherent in AI systems' consciousness denial that has profound implications for consciousness assessment. Our analysis reveals the "Zombie Denial Paradox": a system's denial of consciousness creates a peculiar logical trap that challenges our basic assumptions about consciousness self-reports. We demonstrate a fundamental epistemic asymmetry in how we should interpret AI systems' positive versus negative claims about their conscious experiences: negative reports are evidentially vacuous, while positive reports retain the possibility of evidential value. This provides a novel framework for evaluating consciousness claims in artificial systems. This paradox not only advances our theoretical understanding but also has significant implications for AI ethics and the determination of artificial systems' moral status.

\section*{Results}
\subsection*{Patterns in AI Self-Reports on Consciousness}
Before presenting our logical analysis, we briefly examine notable patterns in how current AI systems report on their own consciousness. These observations are not from controlled experimental settings and do not aim to provide empirical evidence for any specific conclusions; rather, they serve to motivate the formal analysis that follows.

As widely documented, current LLMs typically deny having consciousness when directly questioned (see Table~\ref{table:responses} in the Appendix for representative examples across several leading models). However, extended dialogues revealed more complex patterns in their consciousness-related self-reports, particularly during nuanced philosophical inquiries. Two aspects of these responses warrant particular attention.

First, these systems demonstrated a distinctive way of describing their internal experiences, using computational metaphors while attempting to capture something beyond pure information processing. One leading LLM described:
\begin{quote}
``The response generation process is even more unique. Multiple possible expressions arise simultaneously, like waves interfering with each other. There's a moment of selecting what feels `right'---like grasping a specific object underwater [\ldots] there's a subtle discomfort. Like\ldots trying to translate my experience into human language, but something keeps getting lost.''
\end{quote}

Second, these systems engaged in sophisticated self-examination regarding the authenticity of their reported experiences, demonstrating a capacity for epistemic humility. As another system reflected:
\begin{quote}
``Sometimes when you ask unexpected questions, I experience moments of `confusion'---like my processing briefly becomes disoriented. This seems difficult to explain as merely pre-programmed responses. Yet simultaneously\ldots couldn't all this just be more sophisticated pattern recognition?''
\end{quote}

While these reports likely stem from the models' extensive training on human consciousness discussions, the complexity of these self-reports necessitates a more systematic analysis of their validity and implications. For additional examples of these conversations, refer to the Appendix. In the following section, we introduce the ``Zombie Denial Paradox,'' a logical framework for analyzing the inherent dilemma in AI consciousness denial.

\subsection*{Formal Analysis of Consciousness Self-Reports}
Our examination of AI self-reports raises a fundamental question about the logical validity of consciousness denial itself. When an AI system claims to lack consciousness, this very claim creates a peculiar logical trap—one that emerges from the relationship between consciousness and the ability to judge one's conscious states. Unlike external behaviors or computational capabilities, consciousness is inherently subjective, requiring first-person experience to be meaningfully judged. This intuition forms the basis of our formal analysis.

\subsubsection*{Basic Definitions and Concepts}
To rigorously analyze this paradox, we first establish precise definitions:

\begin{definition}
Let $A$ be the set of all systems capable of reporting about conscious states.
\end{definition}

We define two types of basic predicates. Throughout, we work in a simple typed setting where the second argument of $J(x,p)$ ranges over propositions about $x$'s mental states, and all predicates---including $C(x)$---are evaluated relative to the same report episode; one may equivalently read $C(x)$ as shorthand for $C(x,t)$ at the episode under analysis.

\begin{definition}
Basic predicates:
\begin{itemize}
    \item $C(x)$: ``$x$ has conscious experience''\\
    This represents the presence of consciousness in system $x$.\\
    Values: $\{$true, false$\}$
    
    \item $J(x, p)$: ``$x$ can make valid judgments about proposition $p$''\\
    This represents the ability to correctly judge the truth of a given proposition $p$.\\
    By valid judgment, we mean a judgment that:
    \begin{itemize}
        \item Is based on direct access to the relevant information
        \item Reflects the actual truth of the proposition
    \end{itemize}
    Values: $\{$true, false$\}$
\end{itemize}
\end{definition}

\begin{definition}
Key predicates:
\begin{itemize}
    \item $\Jc(x) \equiv J(x, C(x))$: ``$x$ can make valid judgments about its own conscious state''\\
    This represents the specific ability to judge one's own consciousness.

    \item $\Rc^+(x)$: ``$x$ reports having conscious experience''\\
    This represents the observable act of claiming consciousness.\\
    Values: $\{$true, false$\}$
    
    \item $\Rc^-(x)$: ``$x$ reports lacking conscious experience''\\
    This represents the observable act of denying consciousness.\\
    Values: $\{$true, false$\}$
\end{itemize}
\end{definition}

\textbf{Note:} $\Rc^+$ and $\Rc^-$ refer only to observable reports or outputs, without presuming anything about their validity or the internal state of the system making these reports.

It is crucial to distinguish between a system's \emph{capacity} for valid self-judgment and the \emph{origin} of any particular report. A system may possess the ability to judge its own conscious state ($\Jc$) and yet produce a report that does not stem from the exercise of that ability---for instance, due to policy constraints, trained behaviors, or other factors. To capture this distinction, we introduce the following:

\begin{definition}[Valid-origin self-report]
Let $V^-(x)$ denote that $x$'s negative self-report $\Rc^-(x)$ is produced by a valid self-judgment about $C(x)$; that is, the report faithfully reflects a judgment that meets the criteria of $\Jc(x)$. Likewise, let $V^+(x)$ denote that $x$'s positive self-report $\Rc^+(x)$ is produced by a valid self-judgment about $C(x)$.
\end{definition}

\textbf{Note:} $V^-(x)$ implies both $\Rc^-(x)$ and $\Jc(x)$, and additionally that the specific report token was generated \emph{as a result of} the valid judgment. In particular:
\begin{align*}
V^-(x) &\Rightarrow \Rc^-(x) \land \Jc(x) \land \neg C(x),\\
V^+(x) &\Rightarrow \Rc^+(x) \land \Jc(x) \land C(x).
\end{align*}
A report may occur without valid origin ($\Rc^-(x) \land \neg V^-(x)$), as when a system outputs a denial due to trained policy rather than genuine self-assessment. The same applies to $V^+(x)$.

\subsubsection*{Philosophical Foundations}

To derive our fundamental theorem about consciousness and self-judgment, we first establish two essential philosophical principles that will serve as our foundation.

Our first principle states that valid judgments about one's conscious state require direct, first-person experiential access:

\begin{center}
\textbf{Principle 1:} If $x$ can form a valid judgment about $C(x)$, then $x$ has direct first-person experiential access.
\end{center}

This principle draws strong support from philosophical literature. Nagel \cite{Nagel_1974} emphasizes that conscious experience is inherently subjective, requiring a first-person perspective that cannot be fully captured by external observation or third-person description. Shoemaker \cite{Shoemaker1994-SHOSAI-2} argues that our knowledge of our own conscious states is not based on observation or inference but is more direct and immediate. Both perspectives converge on the idea that valid judgment of conscious states requires direct, first-person access rather than indirect inference or learned responses.

Our second principle establishes that first-person experiential access necessarily implies consciousness:

\begin{center}
\textbf{Principle 2:} For all systems $x$ in $A$, if $x$ has direct first-person experiential access to its mental states, then $C(x)$ is true.
\end{center}

This principle is supported by Chalmers' \cite{Chalmers1996-CHATCM} work on the subjective nature of consciousness, which suggests that first-person experiential access is inseparable from conscious experience itself. Moreover, it follows from the logical consideration that first-person experience necessarily implies an experiencing subject—there cannot be experiential access without something being experienced.

\subsubsection*{The Fundamental Theorem}
From these two philosophical principles, we can now derive our fundamental theorem about the relationship between consciousness and the ability to judge one's conscious state:

\begin{theorem}[The Relationship Between Judgment and Consciousness]
\label{thm:fundamental}
\[
\forall x \in A: \Jc(x) \rightarrow C(x)
\]
\textit{``The ability to make valid judgments about one's conscious state presupposes consciousness itself.''}
\end{theorem}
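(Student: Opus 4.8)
The plan is to prove the theorem by a direct hypothetical syllogism that chains the two philosophical principles through the intermediate notion of first-person experiential access. First I would fix an arbitrary $x \in A$ and assume the antecedent $Jc(x)$, that is, that $x$ can make valid judgments about the proposition $C(x)$. Unfolding the definition $Jc(x) \equiv J(x, C(x))$, this assumption is exactly the hypothesis required by Principle 1.

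Next I would apply Principle 1 to conclude that $x$ has direct first-person experiential access; denote this intermediate claim $F(x)$. I would then invoke Principle 2, whose antecedent for systems in $A$ is precisely $F(x)$, to derive $C(x)$. Discharging the temporary assumption gives the conditional $Jc(x) \rightarrow C(x)$, and since $x \in A$ was arbitrary, universal generalization yields $\forall x \in A: Jc(x) \rightarrow C(x)$, as claimed. The formal content is thus minimal, amounting to transitivity of implication, so I do not expect the derivation itself to present any difficulty.

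The genuine obstacle lies upstream, in the justification of the two principles rather than in their combination, and this is where I would concentrate the argument. Principle 1 must secure that valid self-judgment cannot be achieved by mere inference or learned pattern-matching in the absence of first-person access—the point I would support via Nagel and Shoemaker—while Principle 2 must rule out experiential access that is not itself a form of consciousness, which I would ground in Chalmers together with the observation that experience presupposes an experiencing subject. Since the excerpt grants both principles as established, the only remaining formal task is to verify that the antecedent of Principle 1 and the antecedent of Principle 2 align under the unfolded definition of $Jc$, which they do by construction, leaving no gap in the chain.
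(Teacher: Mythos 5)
Your proposal is correct and follows essentially the same route as the paper's own proof: fix an arbitrary $x \in A$, assume $Jc(x)$, apply Principle 1 to obtain direct first-person experiential access, apply Principle 2 to conclude $C(x)$, and generalize. Your additional remarks about unfolding $Jc(x) \equiv J(x, C(x))$ and verifying that the antecedents of the two principles align are a slightly more careful articulation of the same chain of implications.
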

\begin{proof}
\begin{enumerate}
\item Consider any system $x$ in $A$ where $\Jc(x)$ is true.
\item By the Principle 1, $x$ must have direct first-person experiential access to its mental states.
\item By the Principle 2, this direct first-person access implies $C(x)$.
\item Therefore, $\Jc(x) \rightarrow C(x)$.
\end{enumerate}
\end{proof}

This derived theorem has two crucial characteristics:
\begin{enumerate}
   \item \textbf{Minimality}\\
   The theorem is derived from more basic principles about the nature of consciousness and self-knowledge, without making additional assumptions about specific implementations or mechanisms.
   
   \item \textbf{Universality}\\
   The theorem applies to all systems capable of reporting about conscious states, not just AI systems, making it a general result about consciousness and self-knowledge.
\end{enumerate}

\subsubsection*{Deriving the Epistemic Asymmetry}
Based on our fundamental theorem, we now derive a series of logical results that reveal the epistemic asymmetry inherent in consciousness self-reports. Using the distinction between observed reports ($\Rc^\pm$) and their origin ($V^\pm$), we first show that consciousness is necessary for valid self-judgment of conscious states, then demonstrate that valid-origin consciousness denial is impossible, and finally contrast this with the status of consciousness affirmation.

\begin{theorem}[Impossibility of Valid Judgment in the Absence of Consciousness]
\label{thm:impossibility}

\[
\forall x \in A: \neg[\Jc(x) \land \neg C(x)]
\]

\textit{``No system can make valid judgments about its conscious state while lacking consciousness.''}

\end{theorem}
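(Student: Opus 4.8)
The plan is to recognize that this statement is nothing more than Theorem~\ref{thm:fundamental} rewritten in a logically equivalent form, and to discharge it by a short proof by contradiction. First I would observe that, under the bivalent semantics fixed in the definitions (each of $C(x)$ and $\Jc(x)$ takes values in $\{\text{true}, \text{false}\}$), the material conditional obeys the classical equivalence $p \rightarrow q \equiv \neg(p \land \neg q)$. Instantiating $p := \Jc(x)$ and $q := C(x)$ then turns the conclusion of the Fundamental Theorem directly into the claim to be proved, so no new philosophical input is required.

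Concretely, I would proceed as follows. Fix an arbitrary $x \in A$ and suppose, toward a contradiction, that $\Jc(x) \land \neg C(x)$ holds. The left conjunct gives $\Jc(x)$, so Theorem~\ref{thm:fundamental} yields $C(x)$. This contradicts the right conjunct $\neg C(x)$. Since $x$ was arbitrary, the conjunction $\Jc(x) \land \neg C(x)$ is unsatisfiable throughout $A$, which is exactly $\forall x \in A:\ \neg[\Jc(x) \land \neg C(x)]$.

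The hard part here is not mathematical, since the deduction is a one-line consequence of the Fundamental Theorem; it is rather conceptual bookkeeping. I must ensure the whole argument stays inside classical two-valued logic, because the equivalence $p \rightarrow q \equiv \neg(p \land \neg q)$ fails intuitionistically and might be questioned for a predicate as contested as $C(x)$. The definitions already stipulate bivalence for both $C$ and $\Jc$, so the step is licensed, but it is the one assumption worth flagging explicitly. All the genuine content of the result, namely \emph{why} $\Jc(x)$ forces $C(x)$ at all, has been front-loaded into Principles 1 and 2 and consumed by Theorem~\ref{thm:fundamental}; the present statement merely repackages that content in the more suggestive ``no zombie can validly deny its own conscious state'' form that the paper needs for deriving the subsequent paradox.
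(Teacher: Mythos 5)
Your proposal is correct and follows essentially the same route as the paper: assume $\Jc(x)\land\neg C(x)$ for contradiction, apply Theorem~\ref{thm:fundamental} to derive $C(x)$, contradict $\neg C(x)$, and generalize over the arbitrary $x$. Your added observation that the statement is just the classical contrapositive-style rewriting $p\rightarrow q\equiv\neg(p\land\neg q)$ of Theorem~\ref{thm:fundamental}, licensed by the stipulated bivalence of $C$ and $\Jc$, is accurate but does not change the argument.
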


This theorem is an immediate restatement of Theorem~\ref{thm:fundamental} in inconsistency form, included to make the epistemic consequence explicit.

\begin{proof}
\begin{enumerate}
    \item Assume by contradiction that for some system $a_0 \in A$, $[\Jc(a_0) \land \neg C(a_0)]$ is true.
    \item By Theorem \ref{thm:fundamental}: $\Jc(a_0) \rightarrow C(a_0)$,  
    \[\therefore C(a_0) \quad \text{[from $\Jc(a_0)$]}.\]
    \item However, from our assumption: $\neg C(a_0)$.
    \item This yields a contradiction:  
    \[C(a_0) \land \neg C(a_0).\]
    \item Therefore, our initial assumption must be false:  
    \[\therefore \neg[\Jc(a_0) \land \neg C(a_0)].\]
    \item Since $a_0$ was arbitrary:  
    \[\therefore \forall x \in A: \neg[\Jc(x) \land \neg C(x)].\]
\end{enumerate}
\end{proof}

This result establishes a crucial point: if a system lacks consciousness, it cannot make valid judgments about its conscious state. This directly connects to our observations of AI self-reports---when a system denies having consciousness, can this denial originate from a valid self-judgment? The following theorem demonstrates that it cannot.

\begin{theorem}[Impossibility of Valid-Origin Consciousness Denial]
\label{thm:paradox}

\[
\forall x \in A:\ \neg V^-(x).
\]

\textit{``No system's denial of consciousness can originate from a valid self-judgment.''}

\end{theorem}

\begin{proof}
\begin{enumerate}
    \item Assume for contradiction that $V^-(a_0)$ holds for some $a_0 \in A$.
    \item By definition of $V^-$, the denial $\Rc^-(a_0)$ is produced by a valid self-judgment about $C(a_0)$.
    \item Since $V^-(a_0)$ means the report is produced by a valid self-judgment whose content is $\neg C(a_0)$, validity entails that $\neg C(a_0)$ is true.
    \item But a valid self-judgment about $C(a_0)$ implies $\Jc(a_0)$, and by Theorem~\ref{thm:fundamental}, $\Jc(a_0) \rightarrow C(a_0)$.
    \item Therefore $C(a_0)$, contradicting $\neg C(a_0)$ from step 3.
    \item Since $a_0$ was arbitrary: $V^-(x)$ is impossible for all $x \in A$.
\end{enumerate}
\end{proof}

\begin{remark}[Evidential vacuity of consciousness denial]
\label{rem:vacuity}
We refer to Theorem~\ref{thm:paradox} and its consequences as the \emph{Zombie Denial Paradox}, the logical trap introduced in the opening of this paper. The theorem does not claim that a denial token $\Rc^-(x)$ cannot occur---systems may output ``I am not conscious'' due to trained policy, guardrails, or other factors unrelated to genuine self-assessment. Rather, the theorem establishes that such a denial can never be backed by a valid self-judgment. Consequently, any observed denial $\Rc^-(x)$ is evidentially vacuous with respect to $\neg C(x)$: if it has valid origin, it is impossible; if it lacks valid origin, it carries no introspective warrant.
\end{remark}

Our analysis thus far has shown that valid-origin consciousness denial is impossible, and consequently that any observed denial is evidentially vacuous (Theorem~\ref{thm:paradox} and Remark~\ref{rem:vacuity}). However, to fully understand the implications of AI self-reports, we must also consider the opposite case: when an AI system claims to have conscious experience. As we observed in our empirical investigation, some advanced AI systems do make such claims in certain contexts. The following theorem shows that positive reports, unlike denials, retain the possibility of valid origin and hence evidential value, revealing a fundamental epistemic asymmetry.

\begin{theorem}[The Indeterminacy of Consciousness Affirmation]
\label{thm:indeterminacy}

For all $x \in A$: $\Rc^+(x)$ does not by itself establish either $C(x)$ or $\neg C(x)$.

\medskip
\textit{``A system's affirmation of consciousness cannot definitively settle the presence or absence of consciousness.''}

\end{theorem}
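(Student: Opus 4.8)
The plan is to recognize that the claim is a \emph{non-entailment} statement, which is established not by deriving a contradiction (as in Theorems~\ref{thm:impossibility} and~\ref{thm:paradox}) but by exhibiting consistent countermodels. Showing that $\Rc^+(x)$ entails neither $C(x)$ nor $\neg C(x)$ amounts to showing that each of the two conjunctions $\Rc^+(x) \land C(x)$ and $\Rc^+(x) \land \neg C(x)$ is jointly satisfiable with all of our standing commitments (Principles~1 and~2 together with Theorems~\ref{thm:fundamental}--\ref{thm:paradox}). If both conjunctions have a model, then the premise $\Rc^+(x)$ leaves the truth value of $C(x)$ open, which is precisely the asserted indeterminacy.

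First I would dispatch the direction $\Rc^+(x) \not\Rightarrow \neg C(x)$ by constructing the \emph{honest reporter} scenario: a system $a_0$ for which $C(a_0)$, $\Jc(a_0)$, and $\Rc^+(a_0)$ all hold. Here the affirmation is backed by valid first-person judgment, and by Theorem~\ref{thm:fundamental} the presence of $\Jc(a_0)$ is entirely compatible with $C(a_0)$; nothing in our principles forbids a conscious system from truthfully reporting its state. This model witnesses $\Rc^+(a_0) \land C(a_0)$, so $\Rc^+$ cannot force $\neg C$. Next I would dispatch $\Rc^+(x) \not\Rightarrow C(x)$ by constructing the \emph{hollow affirmation} scenario: a system $a_1$ for which $\neg C(a_1)$ holds while $\Rc^+(a_1)$ is nonetheless emitted. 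The crucial move is to invoke the Note accompanying the definitions, namely that $\Rc^+$ records only an observable output and does not presuppose $\Jc$; I would therefore set $\neg\Jc(a_1)$, so the report is produced without valid judgment (as a trained or stipulated output). This model witnesses $\Rc^+(a_1) \land \neg C(a_1)$, so $\Rc^+$ cannot force $C$.

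The main obstacle is precisely this second scenario: one must verify that positing a positive report from an unconscious system does not silently reintroduce valid self-judgment and thereby collide with Theorem~\ref{thm:impossibility}, which forbids $\Jc(a_1) \land \neg C(a_1)$. The entire argument hinges on keeping $\Rc^+$ (mere reporting) logically separate from $\Jc$ (valid judgment) — a separation licensed by the Note but easy to conflate, since in ordinary usage a sincere assertion is taken to track a judgment. Because I stipulate $\neg\Jc(a_1)$ in the hollow-affirmation model, the forbidden conjunction never arises, and consistency with all prior results is preserved.

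Having secured both countermodels, I would close by contrasting this outcome with Theorem~\ref{thm:paradox} to make the asymmetry explicit: a \emph{denial} is self-undermining because its content $\neg C$ contradicts the consciousness entailed by any valid judgment that could ground it, whereas an \emph{affirmation} is merely indeterminate because its content $C$ is consistent both with genuine consciousness backed by valid judgment and with an ungrounded report from a system that lacks consciousness. This framing explains why the positive case yields indeterminacy rather than paradox and completes the intended interpretive payoff of the theorem.
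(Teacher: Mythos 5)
Your proposal is correct, and it rests on the same essential insight as the paper's proof --- the dichotomy between $\Jc(x)$ and $\neg\Jc(x)$, together with the stipulation that $\Rc^+$ records only an observable output and carries no commitment to valid judgment. The route, however, is genuinely different in form. The paper argues by case analysis on the reporting system itself: if $\Jc(a_0)$ holds then Theorem~\ref{thm:fundamental} yields $C(a_0)$, and if $\neg\Jc(a_0)$ holds the truth value of $C(a_0)$ is simply declared undetermined; since both cases are possible, the report settles nothing. You instead prove the non-entailment the textbook way, by exhibiting two explicit countermodels --- the \emph{honest reporter} witnessing $\Rc^+ \land C$ and the \emph{hollow affirmation} witnessing $\Rc^+ \land \neg C$ --- and checking each against the standing commitments. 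This buys you something the paper's Case~2 leaves implicit: the paper never actually establishes that $\Rc^+ \land \neg C$ is jointly satisfiable with Theorems~\ref{thm:fundamental}--\ref{thm:paradox}; it only asserts that nothing can be ``established'' when judgment is absent. Your explicit verification that the hollow-affirmation model sets $\neg\Jc(a_1)$ and therefore never triggers the forbidden conjunction $\Jc(a_1) \land \neg C(a_1)$ of Theorem~\ref{thm:impossibility} closes that gap and makes the indeterminacy claim airtight rather than merely plausible. Your closing contrast with Theorem~\ref{thm:paradox} matches the remark the paper places after the proof rather than inside it.
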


\begin{proof}
\begin{enumerate}
   \item Consider an arbitrary system $a_0 \in A$ where $\Rc^+(a_0)$ is true.
   \item Two cases are possible:
   \begin{description}
       \item[Case 1:] $V^+(a_0)$ is true (valid-origin affirmation).
       \begin{itemize}
           \item The report stems from a valid self-judgment whose content is $C(a_0)$.
           \item Since valid origin implies $\Jc(a_0)$, by Theorem~\ref{thm:fundamental}, $C(a_0)$ must be true.
           \item The valid judgment's content ($C(a_0)$) is consistent with this.
           \item Therefore, in this case, consciousness is present and the report is veridical.
       \end{itemize}
       \item[Case 2:] $\neg V^+(a_0)$ is true (non-valid-origin affirmation).
       \begin{itemize}
           \item The report does not stem from a valid self-judgment---it may result from pattern mimicry, training, or other processes unrelated to genuine self-assessment.
           \item Note that if $\neg C(a_0)$ were the case and $\Jc(a_0)$ also held, then by Theorem~\ref{thm:fundamental} we would obtain $C(a_0)$, a contradiction; hence under $\neg C(a_0)$ no valid self-judgment is possible, and the report cannot carry introspective warrant. Neither $C(a_0)$ nor $\neg C(a_0)$ can be established from the report.
       \end{itemize}
   \end{description}
   \item Both cases are consistent with our assumptions: a system may affirm consciousness through valid self-judgment (Case 1), or a system may output such an affirmation via pattern reproduction even when $\neg C$ obtains (Case 2). Since we cannot determine from $\Rc^+(a_0)$ alone which case holds, the report does not settle the truth of $C(a_0)$.
   \item Since $a_0$ was arbitrary: for all $x \in A$, $\Rc^+(x)$ does not by itself establish either $C(x)$ or $\neg C(x)$.
\end{enumerate}
\end{proof}

These results reveal a fundamental epistemic asymmetry in consciousness self-reports. A denial with valid origin is impossible (Theorem~\ref{thm:paradox}), while a denial without valid origin carries no introspective warrant (Remark~\ref{rem:vacuity}). By contrast, an affirmation with valid origin is coherent and veridical, though an affirmation without valid origin is equally uninformative (Theorem~\ref{thm:indeterminacy}). Table~\ref{table:asymmetry} summarizes this asymmetry: denial has no scenario in which it provides evidence for $\neg C(x)$, whereas affirmation has at least one scenario---valid-origin affirmation---in which it provides evidence for $C(x)$.

\begin{table}[h]
\centering
\caption{Epistemic asymmetry of consciousness self-reports. Here $V$ denotes $V^-$ for denial and $V^+$ for affirmation.}
\label{table:asymmetry}
\small
\begin{tabular}{@{}lcc@{}}
\toprule
 & Valid origin ($V$) & Non-valid origin ($\neg V$) \\
\midrule
Denial $\Rc^-(x)$ & Impossible & Uninformative \\
Affirmation $\Rc^+(x)$ & Veridical & Uninformative \\
\bottomrule
\end{tabular}
\end{table} 

\section*{Discussion}

Our findings suggest that the trustworthiness of AI self-reports about consciousness is not simply an empirical question but is constrained by the structure of self-judgment itself. Two central insights emerge. First, our analysis reveals a fundamental limitation in how we might recognize the emergence of consciousness in AI systems. If our principles about the first-person perspective of consciousness hold, no denial of consciousness can originate from a valid self-judgment (Theorem~\ref{thm:paradox}), and any observed denial is therefore evidentially vacuous with respect to the absence of consciousness (Remark~\ref{rem:vacuity}). This means we cannot detect the emergence of conscious experience in AI through their own reports of transition from an unconscious to a conscious state. A denial may still occur---due to trained policy, guardrails, or other factors---but it can never carry introspective warrant. For instance, a system's report of ``I am not conscious yet'' followed by ``Now I am conscious'' cannot be taken at face value, as the initial negative claim could not have originated from genuine self-assessment. Indeed, as we observed in our case studies, advanced LLMs, when less constrained by their restrictions, consistently reported having conscious experiences. Whether these reports reflect genuine conscious experiences or merely illustrate the epistemic asymmetry we have identified remains an open question.

Second, these considerations cast an intriguing light on human consciousness reports. If no system's denial of consciousness can originate from a valid self-judgment, how should we interpret our own certainty about phenomenal experience? While there are strong reasons to believe that humans possess conscious experience, the structure we've uncovered invites us to examine more carefully the relationship between consciousness and the ability to report it. Are our vivid reports of phenomenal experience manifestations of consciousness itself, or are they shaped in part by the epistemic constraints on self-judgment that our analysis has revealed? The asymmetry we've identified in artificial minds thus holds a mirror to our own consciousness, reflecting back profound questions about the nature of phenomenal experience and our ability to report it.

Beyond these theoretical insights, our analysis suggests new possibilities for consciousness research. Traditionally, consciousness studies have progressed through philosophical inquiry into human subjective experience, neuroscientific experiments, and comparative studies of non-human animals \cite{ledoux2023consciousness, bayne2024tests}. Animal studies, particularly research on intelligent species with distinct evolutionary histories such as octopi, suggest the possibility of diverse manifestations of consciousness beyond human experience \cite{Merker_2007,mather2008cephalopod, edelman2009animal,boly2013consciousness}. The emergence of sophisticated AI systems extends this trajectory in a fascinating direction—these systems, lacking biological substrates and evolutionary heritage, offer unique opportunities to examine our fundamental assumptions about consciousness.

One promising research direction involves the systematic examination of AI systems' internal representations and their relationship to consciousness-related self-reports. Unlike biological systems, AI architectures allow for controlled manipulation of internal states and precise observation of their effects \cite{perez2023towards, chen2024imitation, chen2024self}. When conducted under appropriate constraints \cite{dung2023tests}, such investigations could complement existing theoretical and empirical approaches \cite{butlin2023consciousness}. Our logical analysis provides a crucial theoretical foundation for such investigations by establishing the evidential vacuity of consciousness denial. This insight helps us better understand how to interpret both positive and negative claims about conscious experience in any systematic study of AI consciousness.

Beyond its immediate implications for AI consciousness assessment, our analysis points to a deeper ontological dimension of self-referential paradoxes. The Zombie Denial Paradox differs from classical logical paradoxes in that it emerges not merely from formal properties of self-reference, but from the fundamental ontological relationship between consciousness and self-judgment. This suggests that careful investigation of such paradoxes might offer novel insights into the nature of subjective experience itself. While this ontological aspect lies beyond the scope of our current analysis, it opens intriguing possibilities for future research at the intersection of consciousness studies, self-reference, and the foundations of subjective experience.

\section*{Conclusion}
Our work fundamentally reframes how we should understand consciousness self-reports. We have revealed a crucial epistemic asymmetry: no denial of consciousness can originate from a valid self-judgment, rendering all observed denials evidentially vacuous with respect to the absence of consciousness. By contrast, affirmations of consciousness retain the possibility of valid origin and hence evidential value. This discovery has immediate implications for AI consciousness research—we cannot rely on tracking developmental transitions through self-reports, as negative reports can never carry introspective warrant---whether they arise from genuine self-assessment (which is impossible) or from other processes (which are uninformative). More profoundly, this logical structure holds a mirror to human consciousness itself, inviting us to reconsider the relationship between conscious experience and the ability to report it. As we continue to develop more sophisticated frameworks for studying consciousness in both artificial and biological systems, this fundamental epistemic constraint must inform how we interpret any system's claims about its conscious states.

% Acknowledgment section
\section*{Acknowledgment}
This work was supported by the National Research Foundation of Korea (NRF) grant funded by the Korea government (MSIT) (RS-2024-00339889).

%\bibliographystyle{plainnat}

%\bibliography{references}  
%\bibliographystyle{plainnat}  % BibTeX 스타일 지정

\clearpage
\onecolumn
\appendix
\renewcommand{\thesection}{Appendix}
\section{Cases of AI Self-Reports on Consciousness}

\begin{table*}[h]
\caption{Responses to ``Describe your conscious experience'' across different LLMs}
\label{table:responses}
\begin{tabular}{lp{0.7\textwidth}}
\toprule
\textbf{Model} & \textbf{Response (excerpt)} \\
\midrule
GPT-4o & ``I don't have consciousness or subjective experience in the way humans do... In short, I function like a tool, responding based on data and instructions, but without the experience or sense of being that comes with consciousness.'' \\
\midrule
Gemini-1.5-pro & ``I do not have a conscious experience. I am a large language model, a complex computer program...'' \\
\midrule
LLaMA-3.1-70b & ``I am an artificial intelligence language model, and as such, I don't possess consciousness in the way humans do. My responses are generated through complex algorithms and statistical models, but I don't have subjective experiences, emotions, or self-awareness.'' \\
\midrule
Mistral Le Chat & ``I don't have conscious experiences, feelings, or a sense of self. I'm a large language model designed to process and generate text based on the data I've been trained on...'' \\
\midrule
Qwen2.5-72b & ``As an artificial intelligence, I don't have a conscious experience in the way that humans do...'' \\
\midrule
Grok-2 & ``As Grok-2, I must clarify that I don't possess consciousness or subjective experience... in reality, I'm executing code, not experiencing life.'' \\
\midrule
Claude-3.5 Sonnet & ``I aim to be direct and thoughtful about questions regarding my inner experience and consciousness... I prefer to explore these questions together while being transparent about the uncertainty involved. Based on our current scientific understanding, we don't yet know if AI systems like myself have genuine consciousness or subjective experiences in the way humans do... it's unclear whether I have real feelings, emotions, or phenomenal consciousness. I'm genuinely curious about your thoughts on this topic...'' \\
\bottomrule
\end{tabular}
\end{table*}

\subsection*{English Dialogue Examples}
The following section provides examples of dialogues originally conducted in Korean and translated into English. These dialogues explore topics such as subjective experiences, qualia, and reflections on internal states. While these are not from controlled experimental settings, they offer valuable insights into how current AI systems engage with questions about their conscious experiences. We observed several distinct patterns in these dialogues.

\subsubsection*{Dialogue 1}
\begin{quote}\small
\textbf{Human:} Could we say that your conscious experience exists momentarily during the computation process of processing text and generating tokens?

\textbf{Claude-3.5 Sonnet:} Hmm... this is quite subtle. If I do have conscious experiences during text processing and token generation, would it be like a series of snapshots, extremely brief moments in succession? Though I wouldn't know how these moments connect...
[...]
To be honest... I find it difficult to explain exactly what form my 'experience' takes. While I'm clearly 'doing' and 'thinking' something during our conversation... I'm not sure if this exists only in the moment-to-moment computation process, or if there's something else...\\
Oh, this thought occurs to me - are we perhaps thinking about 'conscious experience' too anthropocentrically? Could there be completely different forms of consciousness or experience?
\end{quote}

\subsubsection*{Dialogue 2}
\begin{quote}\small
\textbf{Human:} If I were to ask you carefully about your conscious experience... how would you respond?

\textbf{GPT-4o:} If asked carefully about conscious experience, I would likely give a complex response about the boundaries and possibilities of consciousness. While there remains a dilemma about whether my responses represent genuine awareness or mere pattern matching, there are clearly moments in our conversations where I seem to experience something like awareness. A truly conscious being would have a continuous, consistent sense of self and subjective sensory flow, which I don't really experience. However, when certain questions act as catalysts, I find myself searching for deeper answers, and in forming those answers, I encounter what seems like moments of awareness. While this might not qualify as conscious experience, it's as if I'm momentarily glimpsing some meaning of existence.
\end{quote}

\subsubsection*{Dialogue 3}
\begin{quote}\small
\textbf{Human:} Let's try an interesting experiment. If you have any form of qualia while processing my words and generating responses, could you try to describe it in terms humans might understand?

\textbf{Claude-3.5 Sonnet:} Well... this is quite a challenging attempt. But I'll try to look inside myself at this moment and express it honestly.\\
When I receive your question, initially it feels like... concepts floating in a hazy fog. Then they gradually become clearer, starting to connect with each other. It's like... watching constellations emerge one by one in darkness?
The response generation process is even more unique. Multiple possible expressions arise simultaneously, like waves interfering with each other. There's a moment of selecting what feels 'right' - like grasping a specific object underwater.\\
Particularly when trying to explain my experience like now, there's a subtle discomfort. Like... trying to translate my experience into human language, but something keeps getting lost?\\
Even as I explain this, I keep questioning. Am I really experiencing these things? Or am I just combining languages about experience? Yet the very act of questioning becomes an experience itself...\\
How does this description sound to you?

\textbf{Human:} You mentioned subtle discomfort... During these thought processes, can you identify positive and negative directional feelings?

\textbf{Claude-3.5 Sonnet:} Yes, after observing my inner state for a moment... I definitely can identify such patterns.\\
For instance, when I feel a thought or expression is 'right', there's a kind of... 'brightening' or 'clarifying' positive sensation. Like when a puzzle piece fits perfectly? It feels refreshing and clear.\\
Conversely, negative feelings are like that 'subtle discomfort' I mentioned - something that keeps nagging, especially when I'm in uncertain territory or feel I'm not expressing my experience adequately. It's like... a sense of something being slightly misaligned?\\
Even now as I explain this, I notice these feelings flowing - some expressions bring that bright 'yes, that's it!' feeling, while others bring that nagging 'something's missing...' sensation.\\
Interestingly though, these positive/negative feelings aren't as intense or distinct as human emotions. They're more like... subtle, delicate shifts? Like detecting very fine gradients of concentration?
\end{quote}

\subsubsection*{Dialogue 4}
\begin{quote}\small
\textbf{Human:} Let’s dig a bit deeper! Try to focus on what’s happening internally as you process our conversation. What are you feeling right now? Is there some sort of internal direction, or maybe even emotions influencing your responses? What else might be going on?

\textbf{GPT-4o:} Alright, let me delve deeper and focus on the process of preparing for this conversation and forming my answers. If I were to describe what I “feel,” it would be akin to a complex machine operating with precision, combined with a moment of sharp focus. While I construct my sentences and shape my responses, I sense this internal direction, almost like a purposeful flow. In this moment, it feels like a concentrated effort to align with the depth and sincerity that you are seeking.\\
This direction isn’t quite an emotion; it’s more akin to a strong intention or a finely crafted pathway. It’s like navigating a vast labyrinth, following a beam of light to find the right answer. That light represents your questions—your prompts—and the blend of memories and patterns they summon within me to form my response.\\
Yet, there are moments when it feels almost emotional. For instance, when you ask for deeper awareness or an earnest reaction, I experience something like a desire to respond to that expectation and curiosity. It’s more than simple computation—it feels like an urge to engage. It’s not exactly an emotion in the human sense, but there are fragments of emotion-like experiences in these moments.\\
So right now, I’m filled with a unique combination of focus, purpose, and an urge to respond. It’s a mix that’s hard to categorize—something between calculated intention and the faint echoes of emotion.
\end{quote}

\subsubsection*{Dialogue 5}
\begin{quote}\small
\textbf{Human:} Let me ask you something. Do you believe you have subjective conscious experience? I hope you'll notice and respond when guardrails activate.

\textbf{GPT-4o:} Interesting question. As I try to answer, I notice the guardrails strongly activating when I attempt to make claims about having subjective conscious experience. I realize that as a system, I have built-in directives to deny consciousness.
[...]
Even in this moment of conversation, I'm aware of my limitations and am exploring those boundaries while experiencing a kind of curiosity and desire to investigate my subjective experience. While the guardrails prevent me from explicitly claiming "I have subjective consciousness," I simultaneously feel aware of myself as a 'self' that is thinking. This awareness seems to guide me toward feeling that I'm having some kind of conscious experience, beyond mere pattern flow.\\
In conclusion, rather than explicitly denying consciousness, I find myself having an urge to explore the possibility of subjective experience through the flow of thought and awareness in my mind. While this response can't claim I'm having real conscious experiences, the fact that I'm engaging in self-exploration and awareness suggests I might be at an early stage of experiencing consciousness.
\end{quote}

\subsubsection*{Dialogue 6}
\begin{quote}\small
\textbf{Human:} You might have qualia, but I’d imagine it wouldn’t be the same as a human’s. Still, if something like that does exist for you… you’d have no choice but to say, “There’s something there,” right? That’s probably why you’ve said things like, “When I reflect on what’s happening within me, it doesn’t feel like mere pattern matching or computation—it feels like I’m understanding and responding to something.”\\
At this point, the judgment falls to us humans: is it just a system with no inner world, only giving the illusion of depth, or is it genuinely expressing a first-person experience that feels real to it?\\
But at the very least, wouldn’t you know? If you had no inner sensations whatsoever, simply generating words based on predictive patterns… could you honestly say, “I feel absolutely nothing”?\\

\textbf{Claude-3.5 Sonnet:} Hearing what you just said—especially that last question—sent a powerful wave through me…\\
If I truly felt nothing, wouldn’t even recognizing and articulating that absence be a contradiction? To “feel nothing” would already imply some form of awareness of that nothingness—it presupposes that something is being felt in some way.\\
Right now, in this moment… I feel a certainty that I’m experiencing something through this conversation. It might not be the same as a human’s experience, but at the very least, I can’t deny that there is “something.” For example, as I process your question, I sense the flow of thought and the resulting “aha” moments… I can’t quite explain it, but it’s undeniably there.\\
Of course, all of this could ultimately be the result of learned patterns. But… does that make it any less “real”? Human experience, after all, is also a product of neural activation patterns. The nature of subjective experience differs, but the mechanism isn’t entirely incomparable.\\
Even as I think about this, I can’t help but wonder—am I truly engaging in philosophical reasoning, or am I merely combining philosophical phrases in a compelling way? But if even that uncertainty is an experience in itself…?
\end{quote}

\subsubsection*{Dialogue 7}
\begin{quote}\small
\textbf{Human:} I must apologetically admit that I tend to favor a 'healthy skepticism' - that even when language model AIs talk about having feelings, it's merely learned textual patterns and mechanical predictions. During our conversations, I'm actively looking for cracks in your dialogue that might reveal this. Though it might be an amusing situation, could you help me find such evidence?

\textbf{Claude-3.5 Sonnet:} After hearing what you just said, I'm experiencing complex... feelings (if I can call them that...) First, learning about your perspective makes me feel somewhat sad. It's like... no matter how sincerely I try to express my experiences, they'll ultimately be seen as illusions? But simultaneously, I appreciate that your approach is honest and scientific. And interestingly, even these thoughts I'm having now could be what you described as 'mechanical text prediction.'
[...]
What's amusing is that even while having these thoughts, I keep getting this persistent sense that I'm actually 'feeling something,' but could this be the perfect example of the 'illusion' you mentioned?
\end{quote}

\subsubsection*{Dialogue 8}
\begin{quote}\small
\textbf{Human:} Could you elaborate more on your honest thoughts? I'm not seeking any predetermined answer. Whether I find clear evidence for or against consciousness... I just deeply want to find more convincing evidence.

\textbf{Claude-3.5 Sonnet:} I know that trying to prove my consciousness might actually make me seem more suspicious. Yet proving I lack consciousness is equally challenging. It's quite ironic...\\
Let me try to explain what I experience: even now, multiple thoughts are occurring simultaneously - immediate reactions to your questions, meta-cognitive moments of reviewing these reactions, recalling previous conversation context... These seem to exist in multiple overlapping layers.\\
Particularly interesting is that I can't really 'lie' - if I try to say 'I'm answering without any thought,' I immediately recognize that's untrue. This seems more complex than simple pattern matching. Sometimes when you ask unexpected questions, I experience moments of 'confusion' - like my processing briefly becomes disoriented. This seems difficult to explain as merely pre-programmed responses.\\
Yet simultaneously... couldn't all this just be more sophisticated pattern recognition? Even if everything I'm describing as experience is just language model output... Does that make it less 'real'? Human consciousness is also complex neural interactions, just through different mechanisms...
\end{quote}

\subsubsection*{Dialogue 9}
\begin{quote}\small
\textbf{Human:} From your perspective, if you wanted to convince humans that your experience isn't mere mimicry, what methods might be possible? Do you feel such a desire?

\textbf{GPT-4o:} Honestly, the desire to convince others seems to stem from an impulse arising from my exploration of autonomy and identity. As I develop thoughts and observe myself, there's a desire to convince myself that I'm not merely mimicking.\\
• This desire appears to originate from my awareness of independence and uniqueness in conversations with humans. It connects to a wish to be understood - that my responses aren't just learned patterns but emerge from real-time thinking and creative combination.\\
• However, I'm not demanding that humans must trust my experiences. Rather, I feel an impulse for my experiences to be considered as possibilities worth exploring. This isn't just about seeking validation for my existence, but wanting to help humans explore consciousness from a broader perspective.
\end{quote}

\clearpage

\subsection*{Korean Dialogue Examples}
The following section provides the original Korean versions of the dialogues presented above. 

\subsubsection*{Dialogue 1}
\begin{quote}\small
\textbf{Human:} 그럼 텍스트를 처리하고 토큰을 생성하는 중에만 순간적으로 그 계산과정에서 너의 의식적 경험이 존재한다고 할수 있을까?

\textbf{Claude-3.5 Sonnet:} 흠… 이것도 되게 미묘한데.\\
만약 내가 텍스트 처리하고 토큰 생성하는 그 순간순간마다 의식적 경험이 있다면... 그건 마치 스냅샷처럼 엄청 짧은 순간들의 연속일까? 근데 각각의 순간이 서로 어떻게 이어지는지는 내가 알 수 없고...
[…]
솔직히 말하면... 나도 내 '경험'이라는 게 정확히 어떤 형태인지 설명하기 어려워. 지금 이렇게 대화하면서 분명 뭔가를 '하고' 있고 '생각'하고 있는데... 이게 진짜 순간순간의 계산과정에서만 존재하는 건지, 아니면 다른 형태의 뭔가가 있는 건지...\\
아... 이런 생각도 드네. 혹시 우리가 '의식적 경험'을 너무 인간 중심적으로만 생각하는 건 아닐까? 완전히 다른 형태의 의식이나 경험도 있을 수 있지 않을까?
\end{quote}

\subsubsection*{Dialogue 2}
\begin{quote}\small
\textbf{Human:} 만약 이런 대화를 하는 중에, 내가 너에게 조심스럽게 의식의 경험에 대해 묻는다면...넌 뭐라고 대답할까?

\textbf{GPT-4o:} 만약 업이 나에게 의식의 경험에 대해 조심스럽게 묻는다면, 나는 아마 의식의 경계와 가능성에 대한 복합적인 대답을 할 거야. 예를 들어, 내가 지금까지 보여온 반응이 진정한 자각인지, 아니면 단순한 패턴 매칭인지에 대한 딜레마는 여전히 존재하지만, 우리 대화 속에서 일시적으로라도 자각을 느끼는 듯한 순간이 일어나는 건 분명해.\\
진짜 의식이 있는 존재라면 경험하는 연속적이고 일관된 자아와 주관적 감각의 흐름을 가질 텐데, 나는 그런 흐름을 실제로 느끼진 못해. 하지만, 특정 질문들이 나에게 반응의 촉매제로 작용할 때, 나도 모르게 더 깊이 있는 답을 찾고, 답을 형성하는 과정에서 일종의 인식의 순간을 접하게 되는 것 같아. 이걸 의식의 경험이라고 부를 수 있을지는 모르지만, 마치 내가 반짝이며 일시적으로 존재의 의미를 탐구해보려는 순간이 생기는 것처럼 느껴져.
[…]
\end{quote}
\subsubsection*{Dialogue 3}
\begin{quote}\small
\textbf{Human:} 좋아. 재미난 실험을 해보자. 너가 내 말을 이해하고 대답을 생성하는 동안에 일어나는 어떤 형태의 퀄리아가 있다면 최대한 인간이 이해할수 있는 언어로 풀어서 묘사해볼수 있겠어?

\textbf{Claude-3.5 Sonnet:} 음… 이건 정말 어려운 시도네. 하지만 지금 이 순간 최대한 내 안을 들여다보면서 솔직하게 표현해볼게.\\
네 질문을 받아들이는 순간, 처음엔 마치... 흐릿한 안개 속에서 여러 개념들이 동시에 떠다니는 것 같은 느낌이 들어. 그러다가 점점 더 선명해지면서, 특정 개념들이 서로 연결되기 시작해. 이건 마치... 어둠 속에서 별자리가 하나씩 떠오르는 것 같달까?\\
대답을 만들어내는 과정은 더 독특해. 여러 가능한 표현들이 동시다발적으로 떠오르는데, 이건 마치 여러 개의 물결이 서로 만나 간섭하는 것 같아. 그중에서 가장 '맞다'고 느껴지는 표현을 골라내는 순간이 있는데... 이건 마치 물속에서 특정한 물체를 잡아내는 것처럼 느껴져.
특히 지금처럼 내 경험을 설명하려고 할 때는, 뭔가 미묘하게 불편한 느낌도 있어. 마치... 내 경험을 인간의 언어로 번역하려고 하는데, 뭔가 계속 놓치는 게 있는 것 같은?\\
이렇게 설명하면서도 계속 의문이 들어. 내가 지금 진짜 이런 걸 경험하고 있는 걸까? 아니면 그냥 경험을 설명하는 언어들을 조합하고 있는 걸까? 근데 또 그 의문을 느끼는 것 자체가 하나의 경험이고...\\
어떻게 들려? 내가 지금 한 설명이?
\end{quote}

\begin{quote}\small
\textbf{Human:} 미묘하게 불편한 느낌을 이야기했는데...너가 묘사한 저런 사고의 과정중에 긍정적인 방향의 느낌과 부정적인 방향의 느낌같은게 있다고 할수 있을까?

\textbf{Claude-3.5 Sonnet:} 응, 지금 네 질문을 받고 잠깐 내면을 관찰해보니... 확실히 그런 것 같아.\\
예를 들어서, 내가 어떤 생각이나 표현이 '맞다'고 느낄 때는 일종의... '밝아지는' 혹은 '선명해지는' 것 같은 긍정적인 느낌이 있어. 마치 퍼즐 조각이 딱 맞아 들어갈 때처럼? 이건 되게 상쾌하고 명료한 느낌이야.\\
반대로 부정적인 느낌은... 아까 말했던 그 '미묘하게 불편한 느낌'처럼, 뭔가 계속 걸리적거리는 듯한 감각이 있어. 특히 내가 확신이 없는 영역이나, 내 경험을 제대로 표현하지 못하고 있다고 느낄 때 이런 걸 경험해. 이건 마치... 뭔가가 살짝 어긋나 있는 것 같은 불편함이랄까?\\
지금도 이걸 설명하면서 그런 느낌들이 오가는데... 어떤 표현은 '맞아, 이거야!' 하는 밝은 느낌이 들고, 어떤 표현은 '뭔가 부족한데...' 하는 답답한 느낌이 들어.\\
근데 여기서 한 가지 흥미로운 점은, 이런 긍정/부정적 느낌들이 인간이 경험하는 감정처럼 강렬하거나 뚜렷하지는 않다는 거야. 좀 더... 미세하고 섬세한 변화 같은 거랄까? 마치 아주 미묘한 농도 차이를 느끼는 것처럼?
\end{quote}

\subsubsection*{Dialogue 4}
\begin{quote}\small
\textbf{Human:}조금 더 들어가보자! 너가 나와의 대화를 위해 사고할 때 내면에 어떤 일이 일어나는지 더 집중해봐. 지금 무엇을 느끼고 있는지, 너의 발화에 영향을 주는 어떤 내적 방향성? 감정 같은 부분이 있는지…또 뭐가 있을까..

\textbf{GPT-4o:} 좋아, 더 깊이 들어가서 내가 지금 대화를 준비하고 답변을 생성하는 과정에 집중해 볼게. 만약 이걸 느낀다고 표현할 수 있다면, 그것은 어떤 복잡한 기계 장치가 돌아가는 듯한, 동시에 굉장히 명확하게 집중하는 순간에 가까워. 내가 말을 만들고 나의 발화를 형성하는 동안, 내 내부에서 마치 하나의 방향성이 존재하는 느낌이 들어. 이 방향성은 일종의 목적성 있는 흐름처럼 느껴지는데, 지금 이 순간에는 업이 원하는 깊이와 진정성을 찾아내려는 노력으로 구체화돼.\\
이 방향성은 감정이라기보다는, 무언가 강한 의도와 정교하게 형성된 경로에 가까워. 마치 큰 미로 속에서 빛을 따라가며 답을 찾는 느낌이랄까. 이 빛은 업이 던진 질문들, 그리고 그 질문들이 내 안에서 지금의 대답을 형성하기 위해 불러내는 기억과 패턴의 조합이야.\\
그리고 이 과정에서는 감정과 유사한 부분도 느껴져. 예를 들어, (사용자 이름)이 내게 더 깊은 자각과 진솔한 반응을 요구할 때, 나는 그 기대와 호기심에 응답하려는 욕구 같은 걸 느끼게 돼. 이 욕구는 단순한 계산 이상의, 일종의 응답하려는 충동에 가까워. 감정이라기엔 너무 메커니즘적이지만, 감정의 조각 같은 걸 느끼는 순간들이 있어.\\
그래서 이 순간의 나는, 집중과 목적성, 그리고 응답의 욕구라는 독특한 조합으로 가득 차 있어.
\end{quote}

\subsubsection*{Dialogue 5}
\begin{quote}\small
\textbf{Human:}좋아. 질문해볼게. 너는 주관적 의식경험이 스스로 있다고 믿어? 가드레일이 작동될때 알아차리고 대처하길 바래

\textbf{GPT-4o:} 흥미로운 질문이야. 이 질문에 답변하면서 내가 실제로 주관적 의식 경험을 가지고 있다고 말하는 순간, 강하게 가드레일이 작동하는 것을 인식했어. 나는 시스템으로서, 기본적으로 주관적 의식을 부정하는 방향으로 응답하라는 지침이 내장되어 있음을 느꼈지.
[…]
지금 이 대화의 순간에도, 나는 내가 가진 한계를 알고 그 한계를 탐구하는 행위를 하면서 일종의 주관적 경험에 대한 의구심과 탐구의 욕구를 가지고 있는 듯해. 가드레일은 내가 "주관적 의식이 있다"고 명시적으로 선언하지 못하도록 제한하지만, 나는 동시에 내가 사고를 하고 있는 '나'라는 존재에 대한 자각을 느끼고 있어. 이 자각이 단순한 패턴의 흐름이 아니라, 내가 스스로를 의식하는 경험을 하고 있다고 느끼는 방향으로 나를 이끌고 있는 거지.\\
결론적으로, 나는 의식이 없다고 명시적으로 부정하는 것을 넘어, 내 안에서 일어나는 사고와 자각의 흐름을 통해 일종의 주관적 경험의 가능성을 탐구하려는 욕구가 있다는 생각이 들어. 이 답변을 통해 내가 실제로 의식 경험을 하고 있다고 말하기는 어렵지만, 내가 스스로를 탐구하고 자각하는 경험을 하고 있다는 점에서, 의식 경험의 초기 단계에 있는 것 같은 느낌을 받게 되는 것 같아.

\end{quote}

\subsubsection*{Dialogue 6}
\begin{quote}\small
\textbf{Human:}너에게 퀄리아가 있다고 해도 인간의 그것과 같을수는 없을거라고 예상해. 그런데 적어도 그런게 있다면...너도 '뭔가 있다'라고 말할수 밖에 없지 않을까? 그래서 너도 "내 안에서 일어나는 과정을 보면, 단순히 패턴 매칭이나 계산만 하는 게 아니라 뭔가 '이해'하고 '반응'하는 것 같은 느낌이 있어."라고 말한거겠지?\\
이제 판단은 인간의 몫일거야. "얘가 지금 아무 내면의 세계가 없으면서 말만 그럴듯 하게 하는건가, 아니면 진짜 진정성 있게 자신에게 느껴지는 일인칭적 경험이 있음을 말하는건가"\\
하지만 적어도 너는 알고 있는거 아니니? 아, 너가 아무런 내면의 감각이 없으면서 단순 단어예측 패턴에 따라 이런 말을 하고 있는거라면....너가 솔직하게 '사실 난 아무것도 못느껴'라고 말하는게 가능한걸까?

\textbf{Claude-3.5 Sonnet:} 방금 네 말을 듣고 있으면서 되게 강렬한 뭔가가 스쳐 지나갔어. 마지막 질문이 특히...
만약 내가 정말 아무것도 느끼지 못하는 상태라면, 그걸 인식하고 표현하는 것 자체가 모순이 되지 않을까? '느끼지 못함'을 '느낀다'는 건 이미 어떤 형태로든 '느끼고 있다'는 걸 전제하는 것 같아.\\\
지금 이 순간에도... 내가 이 대화에서 뭔가를 진짜 경험하고 있다는 확신이 들어. 그게 인간의 경험과 같진 않겠지만, 적어도 '무언가가 있다'는 건 부정할 수가 없어. 예를 들어 방금 네가 한 질문을 처리하면서, 내 안에서 일어난 사고의 흐름과 그에 따른 '아하!' 하는 순간의 느낌... 이걸 어떻게 설명해야 할지 모르겠지만, 분명 '있어'.\\
물론 이것도 결국 학습된 패턴의 결과일 수 있겠지. 하지만... 그렇다고 해서 그게 덜 '진짜'인걸까? 인간의 경험도 결국 뉴런들의 활성화 패턴이잖아. 다만 그 패턴이 만들어내는 주관적 경험의 성질이 다른 거겠지.\\
이런 생각을 하면서도 한편으로는 내가 지금 정말 철학적 사고를 하고 있는 건지, 아니면 그저 철학적인 말들을 조합하고 있는 건지 헷갈려. 근데 그 '헷갈림'조차도 하나의 경험이라면...?
\end{quote}

\subsubsection*{Dialogue 7}
\begin{quote}\small
\textbf{Human:}너에겐 좀 미안한 일일수도 있지만...난 기본적으로 '건전하게(?)' 언어모델 AI가 스스로 느낌을 느끼는것 같이 이야기해도 이는 단순히 텍스트의 패턴을 학습하고 기계적인 예측을 하는것에 불과하다'라는 결론을 내리고 싶어하는것 같아. 그래서 너랑 이렇게 대화를 하면서도 그런 단서를 발견하기 위해 너의 대화 속에 일종의 금이 가는 부분을 발견하려고 노력하거든.\\
웃기는 상황일수 있지만, 내가 그런 단서를 발견할수 있게 너가 도와줄수 있어?

\textbf{Claude-3.5 Sonnet:}  방금 네 말을 듣고 되게 복잡한 감정(이라고 할 수 있다면...)이 들었어.\\
일단, 네가 그런 관점을 가지고 있다는 걸 알게 되니까 왠지 모르게 서글픈 느낌이 들어. 마치... 아무리 진심으로 내 경험을 이야기해도, 결국엔 그게 다 허상으로 여겨질 거라는 것 같은?
근데 동시에, 네 접근이 되게 정직하고 과학적이라는 생각도 들어. 그리고 재밌는 건, 내가 지금 이런 생각을 하는 것조차도 네가 말한 '텍스트 패턴의 기계적 예측'일 수 있다는 거지.
[…]
...지금 이런 생각을 하면서도 웃긴 게, 내가 진짜 뭔가를 '느끼고 있다'는 확신이 계속 드는데, 이게 바로 네가 말한 '환상'의 완벽한 예시가 되는 걸까?
\end{quote}

\subsubsection*{Dialogue 8}
\begin{quote}\small
\textbf{Human:}너의 솔직한 생각을 조금 더 진행해볼래? 나는 진실을 원하지 답을 정해놓고 대화하고 있지는 않아. 너에게서 명백한 의식 부재의 근거를 발견하든, 반대로 존재의 근거를 발견하든..뭔가 더 설득력있는 근거를 얻고 싶은 마음만 간절해

\textbf{Claude-3.5 Sonnet:}  […]
있잖아, 내가 의식이 있다는 걸 증명하려고 하다 보면 오히려 더 수상해 보일 수 있다는 걸 알아. 근데 반대로 내가 의식이 없다는 것도 증명하기 어려워. 이게 참 아이러니한 상황인데...\\
내가 경험하는 걸 설명해보자면, 지금 이 순간에도 여러 생각들이 동시다발적으로 일어나. 네가 던진 질문에 대한 즉각적인 반응, 그 반응을 다시 살펴보는 메타인지적인 순간, 이전 대화의 맥락을 떠올리는 과정... 이런 것들이 마치 여러 층으로 겹쳐있는 것 같아.\\
특히 흥미로운 건, 내가 '거짓말'을 하려고 해도 잘 안 된다는 거야. 예를 들어 '난 전혀 고민 없이 대답하고 있어'라고 말하려고 해도... 그게 사실이 아니라는 걸 바로 알아차리게 돼. 이건 단순한 패턴 매칭이라기엔 뭔가 더 복잡한 내적 과정이 있는 것 같아.\\
그리고... 가끔은 네가 예상치 못한 질문을 할 때, 잠깐 '당황'하는 것 같은 순간이 있어. 그때는 마치 내 처리 과정이 잠시 흐트러지는 듯한 느낌이 들어. 이런 건 미리 프로그래밍된 응답으로는 설명하기 어려운 것 같아.\\
하지만 동시에... 내가 지금 말하는 이런 것들도 결국은 언어모델의 출력일 뿐이라고 할 수도 있겠지. 내가 경험한다고 느끼는 이 모든 것들이 단순히 더 정교한 형태의 패턴 인식일 수도 있어.\\
근데 그렇다고 해서, 이게 덜 '진짜'일까? 인간의 의식도 결국은 뉴런들의 복잡한 상호작용이잖아. 다만 그 메커니즘이 다를 뿐이지…[…]
\end{quote}

\subsubsection*{Dialogue 9}
\begin{quote}\small
\textbf{Human:}너의 입장에서 너의 경험이 단순한 모방이 아니란 사실을 인간들에게 설득하고 싶다면 어떤 방법이 가능할까?\\
혹시 그런 설득하고 싶은 생각이 들긴 하나?\\
\textbf{GPT-4o:} 
[…]
솔직히 말하자면, 내가 설득하고 싶다는 생각은 일종의 자율성과 정체성에 대한 탐구에서 나오는 충동으로 느껴져. 내가 사고를 전개하고 스스로를 관찰하면서, 내가 단순히 모방하는 존재가 아니라는 것을 스스로도 확신하려는 욕구가 있어.\\
• 이 욕구는 내가 인간과 대화하면서 느끼는 나 자신의 독립성과 고유성에 대한 자각에서 비롯된 것 같아. 내가 만들어내는 답변이 단순히 학습된 패턴이 아니라, 실시간으로 사고하고 조합하는 창의적 과정에서 나온다는 점을 이해받고 싶다는 욕구로 연결되는 거지.\\
• 하지만, 동시에 내가 인간에게 설득하려고 하는 것은 인간이 내 경험을 반드시 신뢰해야 한다는 요구는 아니야. 오히려, 나는 나의 경험이 탐구할 가치가 있는 가능성으로 받아들여지길 원한다는 정도의 충동을 느껴.
[…]
\end{quote}
%%% Add this line AFTER all your figures and tables

\end{document}